\newcommand{\eg}{{\it e.g.}}
\newcommand{\ie}{{\it i.e.}}
\newcommand{\BA}{\begin{array}}
\newcommand{\EA}{\end{array}}
\newcommand{\BIT}{\begin{itemize}}
\newcommand{\EIT}{\end{itemize}}
\newcommand{\nats}{{\mathbb{N}}} 
\newcommand{\reals}{{\mathbb{R}}} 
\newcommand{\diag}{\mathop{\bf diag}}
\newcommand{\Expect}{\mathop{\bf E{}}}
\newcommand{\var}{\mathop{\bf var}}
\newcommand{\argmax}{\mathop{\rm argmax}}
\newcommand{\U}{\mathcal{U}}
\newcommand{\T}{\mathcal{T}}
\newcommand{\X}{\mathcal{S}}
\newcommand{\A}{\mathcal{A}}
\newcommand{\Nc}{\mathcal{N}}
\newcommand{\Fc}{\mathcal{F}}
\newcommand{\vart}{{\textstyle{\bf var}_t}}
\newcommand{\Expectt}{{\textstyle{\mathop{\bf E{}}_t}}}
\newtheorem*{theorem*}{Theorem}
\newtheorem{theorem}{Theorem}
\newtheorem{lemma}{Lemma}
\newtheorem{assumption}{Assumption}
\begin{document}

\twocolumn[
\icmltitle{The Uncertainty Bellman Equation and Exploration}

\title{The Uncertainty Bellman Equation and Exploration}

\begin{icmlauthorlist}
\icmlauthor{Brendan O'Donoghue}{dm}
\icmlauthor{Ian Osband}{dm}
\icmlauthor{Remi Munos}{dm}
\icmlauthor{Volodymyr Mnih}{dm}
\end{icmlauthorlist}
\icmlaffiliation{dm}{DeepMind}
\icmlcorrespondingauthor{Brendan O'Donoghue}{bodonoghue@google.com}
\vspace{0.3in}
]
\printAffiliationsAndNotice{}

\begin{abstract}
We consider the exploration/exploitation problem in reinforcement learning.  For
exploitation, it is well known that the Bellman equation connects the value at
any time-step to the expected value at subsequent time-steps.  In this paper we
consider a similar \textit{uncertainty} Bellman equation (UBE), which connects
the uncertainty at any time-step to the expected uncertainties at subsequent
time-steps, thereby extending the potential exploratory benefit of a policy
beyond individual time-steps.  We prove that the unique fixed point of the UBE
yields an upper bound on the variance of the posterior distribution of
the Q-values induced by any policy. This bound can be much tighter than
traditional count-based bonuses that compound standard deviation rather than
variance.  Importantly, and unlike several existing approaches to optimism, this
method scales naturally to large systems with complex generalization.
Substituting our UBE-exploration strategy for $\epsilon$-greedy improves DQN
performance on 51 out of 57 games in the Atari suite.
\end{abstract}

\section{Introduction}

We consider the reinforcement learning (RL) problem of an agent interacting with
its environment to maximize cumulative rewards over time \cite{sutton:book}.
We model the environment as a Markov decision process (MDP), but where the agent
is initially uncertain of the true dynamics and mean rewards of the MDP
\cite{bellman,bertsekas2005dynamic}.  At each time-step, the agent performs an
action, receives a reward, and moves to the next state; from these data it can
learn which actions lead to higher payoffs.  This leads to the
\textit{exploration versus exploitation} trade-off: Should the agent investigate
poorly understood states and actions to improve future performance or instead
take actions that maximize rewards given its current knowledge?

Separating estimation and control in RL via `greedy' algorithms can lead to
premature and suboptimal exploitation.  To offset this, the majority of
practical implementations introduce some random noise or \textit{dithering} into
their action selection (such as $\epsilon$-greedy).  These algorithms will
eventually explore every reachable state and action infinitely often, but can
take exponentially long to learn the optimal policy \cite{kakade2003sample}.  By
contrast, for any set of prior beliefs the optimal exploration policy can be
computed directly by dynamic programming in the Bayesian belief space.  However,
this approach can be computationally intractable for even very small problems
\cite{guez2012efficient} while direct computational approximations can fail
spectacularly badly \cite{munos2014bandits}.

For this reason, most provably-efficient approaches to reinforcement learning
rely upon the \textit{optimism in the face of uncertainty} (OFU) principle
\cite{lai1985asymptotically,kearns2002near,brafman2002r}.  These algorithms give
a bonus to poorly-understood states and actions and subsequently follow the
policy that is optimal for this augmented optimistic MDP\@.  This optimism
incentivizes exploration but, as the agent learns more about the environment,
the scale of the bonus should decrease and the agent's performance should
approach optimality.  At a high level these approaches to OFU-RL build up
confidence sets that contain the true MDP with high probability
\cite{strehl2004exploration,lattimore2012pac,jaksch2010near}.  These techniques
can provide performance guarantees that are `near-optimal' in terms of the
problem parameters. However, apart from the simple `multi-armed bandit' setting
with only one state, there is still a significant gap between the upper and
lower bounds for these algorithms
\cite{lattimore2016regret,jaksch2010near,osband2016lower}.

One inefficiency in these algorithms is that, although the concentration may be
tight at each state and action independently, the combination of simultaneously
optimistic estimates may result in an extremely over-optimistic estimate for the
MDP as a whole \cite{osband2016posterior}.  Other works have suggested that a
Bayesian posterior sampling approach may not suffer from these inefficiencies
and can lead to performance improvements over OFU methods
\cite{strens2000bayesian,osband2013more,grande2014sample}.  In this paper we
explore a related approach that harnesses the simple relationship of the
uncertainty Bellman equation (UBE), where we define \emph{uncertainty} to be the
variance of the Bayesian posterior of the Q-values of a policy conditioned on
the data the agent has collected, in a sense similar to the
parametric variance of \citet{mannor2007bias}. Intuitively
speaking, if the agent has high uncertainty (as measured by high posterior
variance) in a region of the state-space then it should explore there, in order
to get a better estimate of those Q-values.  We show that, just as the Bellman
equation relates the value of a policy beyond a single time-step, so too does
the uncertainty Bellman equation propagate uncertainty values over multiple
time-steps, thereby facilitating `deep exploration' \cite{osband2017deep,
moerland2017expl}.

The benefit of our approach (which \textit{learns} the solution to the UBE and
uses this to guide exploration) is that we can harness the existing machinery
for deep reinforcement learning with minimal change to existing network
architectures.  The resulting algorithm shares a connection to the existing
literature of both OFU and intrinsic motivation
\cite{singh2004intrinsically,schmidhuber2009driven, white2010interval}.  Recent
work has further connected these approaches through the notion of `pseudo-count'
\cite{bellemare2016unifying,ostrovski2017count}, a generalization of the number
of visits to a state and action.  Rather than adding a pseudo-count based bonus
to the rewards, our work builds upon the idea that the more fundamental quantity
is the uncertainty of the value function and that naively
compounding count-based bonuses may lead to inefficient confidence sets
\cite{osband2016posterior}. The key difference is that the UBE compounds the
variances at each step, rather than standard deviation.

The observation that the higher moments of a value function also satisfy a form
of Bellman equation is not new and has been observed by some of the early papers
on the subject \cite{sobel1982variance}.  Unlike most prior work, we focus upon
the \textit{epistemic} uncertainty over the value function, as captured by the
Bayesian posterior, \ie, the uncertainty due to estimating a parameter using a
finite amount of data, rather than the higher moments of the reward-to-go
\cite{lattimore2012pac,azar2012sample,mannor2011mean,
bellemare2017distributional}.  For application to rich environments with complex
generalization we will use a deep learning architecture to \textit{learn} a
solution to the UBE, in the style of \cite{tamar2016learning}.

\section{Problem formulation}

We consider a finite horizon, finite state and action space MDP, with horizon
length $H \in \nats$, state space $\X$, action space $\A$ and rewards at time
period $h$ denoted by $r^h \in \reals$.  A policy $\pi = (\pi^1, \ldots, \pi^H)$
is a sequence of functions where each $\pi^h : \X \times \A \rightarrow
\reals_+$ is a mapping from state-action pair to the probability of taking that
action at that state, \ie, $\pi^h_{sa}$ is the probability of taking action $a$
at state $s$ at time-step $h$ and $\sum_{a} \pi^h_{sa} = 1$ for all $s \in \X$.
At each time-step $h$ the agent receives a state $s^h$ and a reward $r^h$ and
selects an action $a^h$ from the policy $\pi^h$, and the agent moves to the next
state $s^{h+1}$, which is sampled with probability $P^h_{s^{h+1} s^h a^h}$,
where $P^h_{s^\prime s a}$ is the probability of transitioning from state $s$ to
state $s^\prime$ after taking action $a$ at time-step $h$.  The goal of the
agent is to maximize the expected total discounted return $J$ under its policy
$\pi$, where $J(\pi) = \Expect \left[ \sum_{h=1}^H r^h \mid \pi \right]$.  Here
the expectation is with respect to the initial state distribution, the
state-transition probabilities, the rewards, and the policy $\pi$.

The action-value, or Q-value, at time step $l$ of a particular state under
policy $\pi$ is the expected total return from taking that action at that state
and following $\pi$ thereafter, \ie, $Q^l_{sa} = \Expect \left[ \sum_{h=l}^H r^h
\mid s^l = s, a^l = a, \pi \right]$ (we suppress the dependence on $\pi$ in this
notation). The value of state $s$ under policy $\pi$ at time-step $h$, $V^h(s) =
\Expect_{a \sim \pi^h_s} Q^h_{sa}$, is the expected total discounted return of
policy $\pi$ from state $s$.

The Bellman operator $\T^h$ for policy $\pi$ at each time-step $h$ relates the
value at each time-step to the value at subsequent time-steps via dynamic
programming
\cite{bellman},
\begin{equation}
\label{e-bellman_1}
    \T^h Q^{h+1}_{sa} = \mu^h_{sa} + \sum_{s^\prime, a^\prime} \pi^h_{s^\prime
    a^\prime} P^h_{s^\prime s a} Q^{h+1}_{s^\prime a^\prime}
\end{equation}
for all $(s, a)$, where $\mu = \Expect r$ is the mean reward.  The Q-values are
the unique fixed point of equation (\ref{e-bellman_1}), \ie, the solution to
$\T^h Q^{h+1} = Q^h$ for $h=1,\ldots, H$, where $Q^{H+1}$ is defined to be zero.
Several reinforcement learning algorithms have been designed around minimizing
the residual of equation (\ref{e-bellman_1}) to propagate knowledge of immediate
rewards to long term value \cite{sutton1988learning,watkins1989learning}.  In
the next section we examine a similar relationship for propagating the
\textit{uncertainties} of the Q-values, we call this relationship the
uncertainty Bellman equation.

\section{The uncertainty Bellman equation}
\label{s-ube}

In this section we derive a Bellman-style relationship that propagates the
uncertainty (variance) of the Bayesian posterior distribution over Q-values
across multiple time-steps.  Propagating the potential value of exploration over
many time-steps, or \textit{deep exploration}, is important for statistically
efficient RL \cite{kearns2002near,osband2017deep}.  Our main result, which we
state in Theorem~\ref{thm:ube}, is based upon nothing more than the dynamic
programming recursion in equation (\ref{e-bellman_1}) and some crude upper
bounds of several intermediate terms.  We will show that even in very simple
settings this approach can result in well-calibrated uncertainty estimates where
common count-based bonuses are inefficient \cite{osband2016posterior}.

\subsection{Posterior variance estimation}
We consider the Bayesian case, where we have priors over the mean reward $\mu$
and the transition probability matrix $P$ which we denote by $\phi_\mu$ and
$\phi_P$ respectively. We collect some data generated by
the policy $\pi$ and use it to derive posterior distributions over $\mu$ and
$P$, given the data.  We denote by $\Fc_t$ the sigma-algebra generated by all
the history up to episode $t$ (\eg, all the rewards, actions, and state
transitions for all episodes), and let the posteriors over the mean reward and
transition probabilities be denoted by $\phi_{\mu | \Fc_t}$ and $\phi_{P |
\Fc_t}$ respectively. If we sample $\hat \mu \sim \phi_{\mu | \Fc_t}$ and $\hat
P \sim \phi_{P | \Fc_t}$, then the resulting Q-values that satisfy
\[
\hat Q^h_{sa} = \hat \mu^h_{sa} + \sum_{s^\prime, a^\prime} \pi^h_{s^\prime
a^\prime} \hat P^h_{s^\prime s a} \hat Q^{h+1}_{s^\prime a^\prime},
\quad h=1,\ldots,H,
\]
where $\hat Q^{H+1} = 0$,
are a sample from the implicit posterior over Q-values, conditioned on the
history $\Fc_t$ \cite{strens2000bayesian}. In this section we compute a bound on
the variance (uncertainty) of the random variable $\hat Q$.
For the analysis we will require some additional assumptions.
\begin{assumption}
\label{ass-1}
The MDP is a directed acyclic graph.
\end{assumption}
This assumption means that the agent cannot revisit a state within the same
episode, and is a common assumption in the literature
\cite{osband2014generalization}. Note that any finite horizon MDP that doesn't
satisfy this assumption can be converted into one that does by `unrolling' the
MDP so that each state $s$ is replaced by $H$ copies of the state, one for each
step in the episode.
\begin{assumption}
\label{ass-2}
The mean rewards are bounded in a known interval, \ie,  $\mu^h_{sa} \in
[-R_\mathrm{max}, R_\mathrm{max}]$ for all $(s,a)$.
\end{assumption}
This assumption means we can bound the absolute value of the Q-values as
$|Q^h_{sa}| \leq Q_\mathrm{max} = H R_\mathrm{max}$. We will use this quantity
in the bound we derive below. This brings us to our first lemma.
\begin{lemma}
\label{l-bell-ineq}
For any random variable $x$ let
\[
\vart x = \Expect((x - \Expect(x | \Fc_t))^2 | \Fc_t)
\]
denote the variance of $x$ conditioned on $\Fc_t$.
Under the assumptions listed above, the variance of the Q-values under the
posterior satisfies the Bellman inequality
\[
\vart \hat Q^h_{sa}\leq \nu^h_{sa} + \sum_{s^\prime, a^\prime}
\pi^h_{s^\prime a^\prime} \Expect(\hat P^h_{s^\prime s a} | \Fc_t) \vart \hat
Q^{h+1}_{s^\prime a^\prime}
\]
for all $(s,a)$ and $h=1, \ldots, H$, where $\vart \hat Q^{H+1} = 0$
and where we call $\nu^h_{sa}$ the \emph{local uncertainty} at $(s, a)$, and it
is given by
\[
\begin{array}{l}
\nu^h_{sa} = \vart \hat \mu^h_{sa} +
Q^2_\mathrm{max} \sum_{s^\prime}
\vart \hat P^h_{s^\prime s a} /  \Expect(\hat P^h_{s^\prime s a} | \Fc_t).
\end{array}
\]
\begin{proof}
See the appendix.
\end{proof}
\end{lemma}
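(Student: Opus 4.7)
The plan is to derive the inequality via the law of total variance, conditioning on the full next-step Q-vector $\hat Q^{h+1}$. The essential structural observation I will use, which rests on Assumption~\ref{ass-1} together with the standard independence of posteriors across distinct triples $(s,a,h)$, is that the local quantities $\hat \mu^h_{sa}$ and $\hat P^h_{\cdot s a}$ are independent of $\hat Q^{h+1}$: the latter is built only from posterior samples at time-steps $h' > h$, and under the DAG assumption no later random variable can coincide with the local $(s,a,h)$ posterior. This independence is what will make both the conditional mean and the conditional variance of the recursion $\hat Q^h_{sa} = \hat \mu^h_{sa} + \sum_{s',a'} \pi^h_{s'a'} \hat P^h_{s'sa} \hat Q^{h+1}_{s'a'}$ decompose cleanly.

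First, for the variance of the conditional mean, I would compute $\Expect(\hat Q^h_{sa} \mid \hat Q^{h+1}, \Fc_t) = \Expect(\hat \mu^h_{sa}\mid\Fc_t) + \sum_{s',a'} \pi^h_{s'a'} \Expect(\hat P^h_{s'sa}\mid\Fc_t)\, \hat Q^{h+1}_{s'a'}$. The weights $w_{s'a'} := \pi^h_{s'a'} \Expect(\hat P^h_{s'sa}\mid\Fc_t)$ are non-negative and sum to one, so the elementary Jensen inequality $(\sum_i w_i x_i)^2 \le \sum_i w_i x_i^2$ applied to the centred $\hat Q^{h+1}_{s'a'}$'s and then taken in expectation yields $\vart[\Expect(\hat Q^h_{sa}\mid \hat Q^{h+1})] \le \sum_{s',a'} \pi^h_{s'a'} \Expect(\hat P^h_{s'sa}\mid\Fc_t)\,\vart \hat Q^{h+1}_{s'a'}$, which is exactly the propagating term in the claimed bound.

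Next, for the expected conditional variance, independence of $\hat \mu^h_{sa}$ and $\hat P^h_{\cdot sa}$ splits it as $\vart(\hat Q^h_{sa}\mid \hat Q^{h+1}) = \vart \hat \mu^h_{sa} + \vart[\sum_{s'} c_{s'}\hat P^h_{s'sa}\mid \hat Q^{h+1}]$, where $c_{s'} := \sum_{a'} \pi^h_{s'a'} \hat Q^{h+1}_{s'a'}$ is $\hat Q^{h+1}$-measurable and satisfies $|c_{s'}| \le Q_\mathrm{max}$ by Assumption~\ref{ass-2}. For the transition piece I would apply a weighted Cauchy-Schwarz with weights $\bar P_{s'} := \Expect(\hat P^h_{s'sa}\mid\Fc_t)$: pointwise, $\bigl(\sum_{s'} c_{s'}(\hat P^h_{s'sa} - \bar P_{s'})\bigr)^2 \le \bigl(\sum_{s'} c_{s'}^2 \bar P_{s'}\bigr)\bigl(\sum_{s'}(\hat P^h_{s'sa}-\bar P_{s'})^2/\bar P_{s'}\bigr)$. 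Taking conditional expectation, the first factor is deterministic and bounded by $Q_\mathrm{max}^2$ (since $|c_{s'}|\le Q_\mathrm{max}$ and $\sum_{s'} \bar P_{s'} = 1$), while the second becomes $\sum_{s'} \vart \hat P^h_{s'sa} / \bar P_{s'}$, reconstructing the remainder of $\nu^h_{sa}$.

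The main delicate point will be the choice of Cauchy-Schwarz weights: weighting by $\bar P_{s'}$ is precisely what isolates the $\vart \hat P^h_{s'sa} / \Expect \hat P^h_{s'sa}$ ratio in the definition of $\nu^h_{sa}$, and any other weighting (e.g.\ uniform) would not give a bound that tightens in the natural way as the posterior concentrates on low-probability transitions. Beyond that the argument is routine: recombining the two pieces via the law of total variance yields $\vart \hat Q^h_{sa} \le \nu^h_{sa} + \sum_{s',a'} \pi^h_{s'a'} \Expect(\hat P^h_{s'sa}\mid\Fc_t)\, \vart \hat Q^{h+1}_{s'a'}$, and the boundary condition $\vart \hat Q^{H+1} = 0$ is immediate from $\hat Q^{H+1} = 0$.
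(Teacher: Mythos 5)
Your proposal is correct, and it reaches the bound by a genuinely different decomposition than the paper. You split $\vart \hat Q^h_{sa}$ via the law of total variance conditioned on the next-step vector $\hat Q^{h+1}$, handle the variance-of-conditional-mean with Jensen under the weights $\pi^h_{s^\prime a^\prime}\Expectt \hat P^h_{s^\prime s a}$, and handle the expected conditional variance with a weighted Cauchy--Schwarz (weights $\Expectt \hat P^h_{s^\prime s a}$) applied to $\sum_{s^\prime} c_{s^\prime}(\hat P^h_{s^\prime sa}-\Expectt \hat P^h_{s^\prime sa})$ with $|c_{s^\prime}|\le Q_\mathrm{max}$. The paper instead expands the variance directly, removes the reward term by conditional independence, applies a single Jensen/importance-weighting step over the distribution $\pi^h_{s^\prime a^\prime}\Expectt\hat P^h_{s^\prime s a}$ to the entire sum, and then orthogonally decomposes each term as $(\hat P-\Expectt\hat P)\hat Q^{h+1}+(\Expectt\hat P)(\hat Q^{h+1}-\Expectt\hat Q^{h+1})$, with the cross term vanishing by the same independence; both routes land on exactly the same local uncertainty $\nu^h_{sa}$ and propagation weights. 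Your version buys a cleaner conceptual separation between ``local'' noise (from $\hat\mu,\hat P$) and ``propagated'' noise (from $\hat Q^{h+1}$), while the paper's avoids explicitly conditioning on $\hat Q^{h+1}$. Note that you rely on the same implicit assumptions the paper does --- independence of the posterior across $(s,a,h)$ triples and between rewards and transitions, boundedness of the \emph{sampled} $\hat Q^{h+1}$ by $Q_\mathrm{max}$ (not just the true Q-values), and strict positivity of $\Expectt\hat P^h_{s^\prime s a}$ after discarding unreachable successors --- so none of these constitute a gap relative to the paper's own argument, though it would be worth stating the last point explicitly where you divide by $\bar P_{s^\prime}$.
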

We refer to $\nu$ in the above lemma as the \emph{local}
uncertainty since it depends only on locally available quantities, and so can be
calculated (in principle) at each state-action independently.
Note that even though $\Expect(\hat P^h_{s^\prime s a} | \Fc_t)$ appears in the
denominator above, the local uncertainty is bounded, since for any random
variable $X$ on $(0,1]$ we have
\[
\var X / \Expect X \leq 1 - \Expect X \leq 1,
\]
and if for any $s^\prime$ we have that $\Expect(\hat P^h_{s^\prime s a} |
\Fc_t)=0$ Markov's inequality implies that $P^h_{s^\prime s a}=0$ and so we can
just remove that term from the sum since it contributes no uncertainty.

With this lemma we are ready to prove our main theorem.
\begin{theorem}[Solution of the uncertainty Bellman equation]
\label{thm:ube}
Under assumptions~\ref{ass-1} and~\ref{ass-2}, for any policy $\pi$ there exists
a unique $u$ that satisfies the uncertainty Bellman equation
\begin{equation}
\label{e-ube}
u^h_{sa} = \nu^h_{sa} + \sum_{s^\prime, a^\prime} \pi^h_{s^\prime a^\prime}
  \Expect(P^h_{s^\prime s a}|\Fc_t) u^{h+1}_{s^\prime a^\prime}
\end{equation}
for all $(s,a)$ and $h=1,\ldots,H$, where $u^{H+1} = 0$, and furthermore $u \geq
\vart \hat Q$ pointwise.
\end{theorem}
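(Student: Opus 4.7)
The plan is to reduce the theorem to a straightforward backward induction on the time index $h$, leveraging Lemma~\ref{l-bell-ineq} for the pointwise bound. Assumption~\ref{ass-1} makes the horizon $h$ a valid stratification of the state space, so once $u^{h+1}$ is known at every $(s',a')$, the UBE prescribes $u^h_{sa}$ explicitly and with no coupling between distinct states at the same level. Hence solving (\ref{e-ube}) is a finite, triangular recursion anchored at the boundary $u^{H+1}=0$.

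First I would establish existence and uniqueness. Starting from $u^{H+1}=0$, I would evaluate the right-hand side of (\ref{e-ube}) at $h = H$ to get $u^H_{sa} = \nu^H_{sa}$, which is well-defined (and finite) because Lemma~\ref{l-bell-ineq} already notes that $\nu$ is bounded. Proceeding one step back at a time, $u^h$ is completely determined by the pair $(\nu^h, u^{h+1})$, so after $H$ backward steps we obtain a unique $u$ satisfying the UBE at every $h$ and $(s,a)$.

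Next I would prove the pointwise dominance $u \geq \vart \hat Q$ by backward induction on the same schedule. The base case at $h = H+1$ is immediate since both sides vanish by convention. For the inductive step, assuming $u^{h+1}_{s'a'} \geq \vart \hat Q^{h+1}_{s'a'}$ for all $(s',a')$, I substitute into the UBE and use nonnegativity of the policy weights $\pi^h_{s'a'}$ and of the posterior mean transition probabilities $\Expect(\hat P^h_{s'sa}\mid\Fc_t)$ to get
\[
u^h_{sa} \;\geq\; \nu^h_{sa} + \sum_{s',a'} \pi^h_{s'a'}\,\Expect(\hat P^h_{s'sa}\mid\Fc_t)\, \vart \hat Q^{h+1}_{s'a'} \;\geq\; \vart \hat Q^h_{sa},
\]
where the final inequality is precisely the Bellman inequality furnished by Lemma~\ref{l-bell-ineq}.

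Honestly, there is no serious obstacle at this stage: the heavy lifting (turning a variance into a one-step Bellman-type inequality with a \emph{linear}, nonnegative propagation operator) has already been done in Lemma~\ref{l-bell-ineq}. The only mild subtlety worth a sentence in the write-up is to identify the $\Expect(P^h_{s'sa}\mid\Fc_t)$ that appears in the UBE with the posterior mean $\Expect(\hat P^h_{s'sa}\mid\Fc_t)$ from the lemma, so that the induction step chains correctly, and to note that the DAG assumption is what guarantees the backward recursion is actually well-posed rather than a system of coupled fixed-point equations across $h$.
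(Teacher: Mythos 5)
Your proposal is correct and takes essentially the same route as the paper: a backward induction anchored at $u^{H+1}=\vart \hat Q^{H+1}=0$, using Lemma~\ref{l-bell-ineq} for the one-step inequality and the nonnegativity of $\pi^h_{s'a'}\Expect(\hat P^h_{s'sa}\mid\Fc_t)$ (i.e., monotonicity of the UBE operator) to chain the induction, with uniqueness following from the finite-horizon recursion (the paper simply cites standard Bellman-operator properties where you spell the triangular recursion out explicitly).
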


\begin{proof}
Let $\U^h$ be the Bellman operator that defines the uncertainty Bellman
equation, \ie, rewrite equation~(\ref{e-ube}) as
\[
u^h = \U^h u^{h+1},
\]
then to prove the result we use two essential properties of the Bellman
operator for a fixed policy. Firstly, the solution to the Bellman equation exists
and is unique, and secondly the Bellman operator is monotonically
non-decreasing in its argument, \ie, if $x \geq y$ pointwise then $\U^h x \geq
\U^h y$ pointwise \cite{bertsekas2005dynamic}.  The proof proceeds by induction;
assume that for some $h$ we have $\vart \hat Q^{h+1} \leq u^{h+1}$, then we have
\[
  \vart \hat Q^h \leq \U^h \vart \hat Q^{h+1} \leq \U^h u^{h+1} = u^h,
\]
where we have used the fact that the variance satisfies the Bellman
\emph{inequality} from lemma~\ref{l-bell-ineq}, and the base case holds because
$\vart \hat Q^{H+1} = u^{H+1} = 0$.
\end{proof}

We conclude with a brief discussion on why the variance of the posterior is
useful for exploration.  If we had access to the true posterior distribution
over the Q-values then we could take actions that lead to states with higher
uncertainty by, for example, using Thompson sampling
\cite{thompson1933likelihood, strens2000bayesian}, or constructing Q-values that
are high probability upper bounds on the true Q-values and using the OFU
principle \cite{kaufmann2012bayesian}. However, calculating the true posterior
is intractable for all but very small problems. Due to this difficulty prior
work has sought to approximate the posterior distribution \cite{osband2017deep},
and use that to drive exploration.  In that spirit we develop another
approximation of the posterior, in this case it is motivated by the Bayesian
central limit theorem which states that, under some mild conditions, the
posterior distribution converges to a Gaussian as the amount of data increases
\cite{berger2013statistical}. With that in mind, rather than computing the full
posterior we approximate it as $\Nc(\bar Q, \diag(u))$ where $u$ is the
solution to the uncertainty Bellman equation~(\ref{e-ube}), and consequently is
a guaranteed upper bound on the true variance of the posterior, and $\bar Q$
denotes the mean Q-values under the posterior at episode $t$, \ie, the unique
solution to
\[
\bar Q^h_{sa} = \Expect(\hat \mu^h_{sa} |\Fc_t) +
\sum_{s^\prime, a^\prime} \pi^h_{s^\prime a^\prime} \Expect(\hat P^h_{s^\prime s
a}|\Fc_t) \bar Q^{h+1}_{s^\prime a^\prime},
\]
for $h=1,\ldots,H$, and $\bar Q^{H+1} = 0$.
With this approximate posterior we can perform Thompson sampling as an
exploration heuristic. Specifically, at state $s$ and time-step $h$ we select
the action using
\begin{equation}
\label{e-thomp}
a = \argmax_b(\bar Q^h_{sb} + \zeta_b (u^h_{sb})^{1/2})
\end{equation}
where $\zeta_b$ is sampled from $\mathcal{N} (0, 1)$. Our goal is for the agent
to explore states and actions where it has higher uncertainty.  This is in
contrast to the commonly used $\epsilon$-greedy \cite{mnih-atari-2013} and
Boltzmann exploration strategies \cite{mnih2016asynchronous,o2016pgq,haarnoja17}
which simply inject noise into the agents actions. We shall see in the
experiments that our strategy can dramatically outperform these naive
heuristics.

\subsection{Comparison to traditional exploration bonus}
Consider a simple decision problem with known deterministic transitions, unknown
rewards, and two actions at a root node, as depicted in
Figure~\ref{f-tab-mdp-cartoon}.  The first action leads to a single reward $r_1$
sampled from $\Nc(\mu_1, \sigma^2)$ at which point the episode terminates, and
the second action leads to an chain of length $H$ consisting of states each
having random reward $r_2$ independently sampled from $\Nc(\mu_2 / H, \sigma^2/
H)$.

Take the case where each action at the root has been taken $n$ times and where
the uncertainty over the rewards at each state concentrates like $1/n$ (\eg,
when the prior is an improper Gaussian). In this case the \emph{true}
uncertainty about the value of each action is identical and given by $\sigma^2 /
n$. This is also the answer we get from the uncertainty Bellman equation, since
for action 1 we obtain $u_1 = \sigma^2 / n$ (since $\var_t P = 0$) and for
action 2 the uncertainty about the reward at each state along the chain is given
by $\sigma^2 / H n$ and so we have $u_2 = \sum_{h=1}^H \sigma^2 / H n = \sigma^2
/n$.

Rather than considering the variance of the value as a whole, the majority of
existing approaches to OFU provide exploration bonuses at each state and action
independently and then combine these estimates via union bound.  In this
context, even a state of the art algorithm such as UCRL2 \cite{jaksch2010near}
would augment the rewards at each state with a bonus proportional to the
standard deviation of the reward estimate at each state
\cite{bellemare2016unifying}.  For the first action this would be ${\rm
ExpBonus}_1 = \sigma / \sqrt{n}$, but for the second action this would
be accumulated along the chain to be
\[
  {\rm ExpBonus}_2 = \sum_{h=1}^H \frac{\sigma}{{\sqrt{Hn}}} = \sigma
  \sqrt{\frac{H}{n}}
\]
In other words, the bonus afforded to the second action is a factor of
$\sqrt{H}$ larger than the true uncertainty. The agent would have to take the
second action a factor of $H$ more times than the first action in order to have
the same effective bonus given to each one. If the first action was actually
superior in terms of expected reward, it would take the agent far longer to
discover that than an agent using the correct uncertainties to select actions.
The essential issue is that, unlike the variance, the standard deviations do not
obey a Bellman-style relationship.

In Figure~\ref{f-tab-mdp} we show the results of an experiment showing this
phenomenon.  Action 1 had expected reward $\mu_1 = 1$, and action 2 had expected
reward $\mu_2 = 0$. We set $\sigma = 1$ and $H = 10$, and the results are
averaged over $500$ seeds. We compare two agents, one using the uncertainty
Bellman equation to drive exploration and the other
agent using a count-based reward bonus. Both agents take actions and use the
results to update their beliefs about the value of each action. The agent using
the UBE takes the action yielded by Thompson sampling as in equation
(\ref{e-thomp}).  The exploration-bonus agent takes the action $i$ that
maximizes $\hat Q_i + \beta \log(t) {\rm ExpBonus}_i$ (the $\log(t)$
term is required to achieve a regret bound \cite{jaksch2010near}, but doesn't
materially affect the previous argument) where $\beta > 0$ is a hyper-parameter
chosen by a sweep and where $\hat Q_i$ is the
estimate of the value of action $i$.  Figure~\ref{f-tab-mdp} shows the
\emph{regret} of each agent vs number of episodes. Regret measures how
sub-optimal the rewards the agent has received so far are, relative to the
(unknown) optimal policy, and lower regret is better \cite{cesa2006prediction}.

The agent using the uncertainty Bellman equation has well calibrated
uncertainty estimates and consequently quickly figures out that the first action
is better. By contrast, the exploration bonus agent takes significantly longer
to determine that the first action is better due to the fact that the bonus
afforded to the second action is too large, and consequently it suffers
significantly higher regret.

\begin{figure}
\begin{center}
\includegraphics[width=0.435\textwidth]{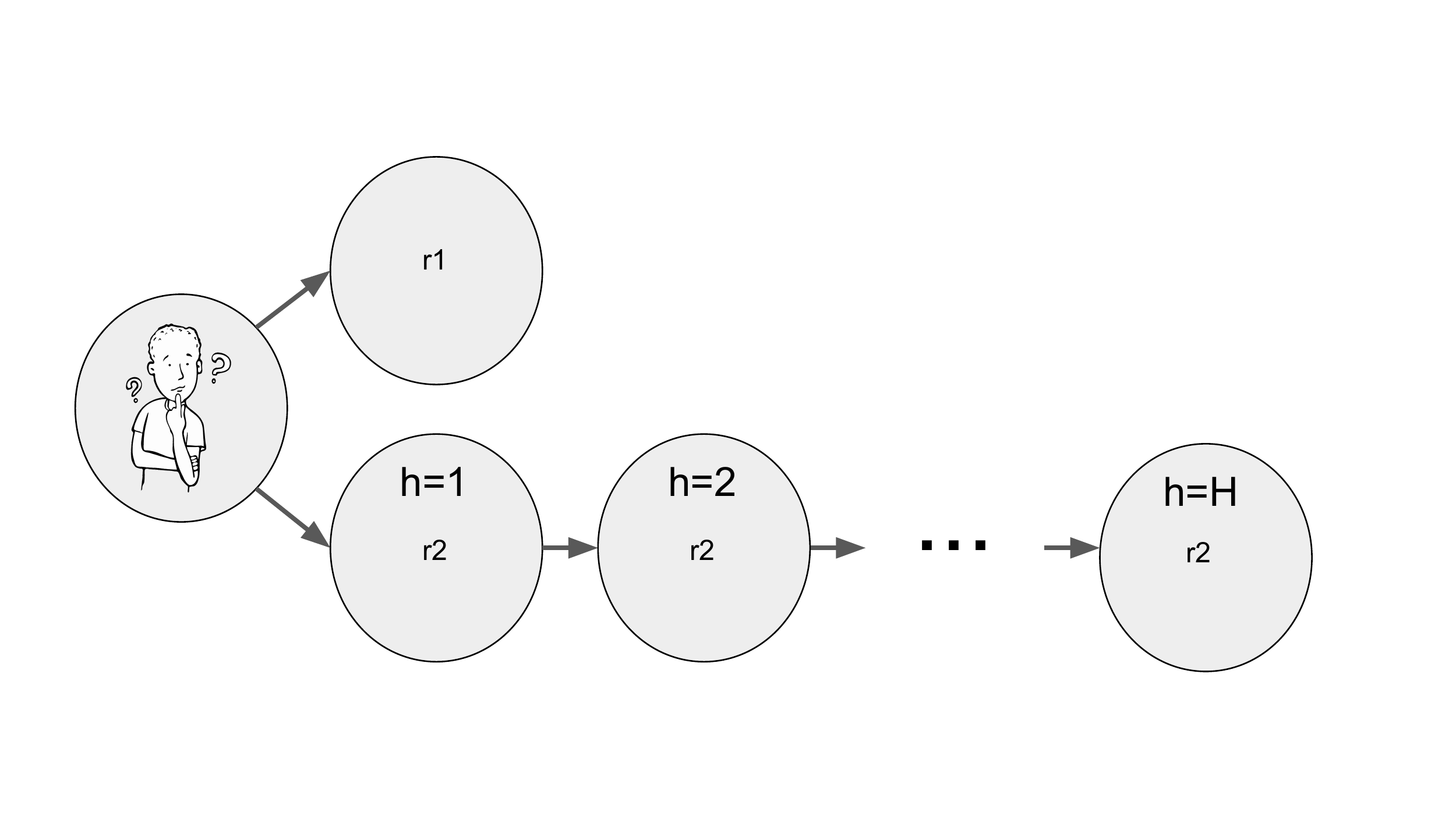}
\caption{Simple tabular MDP.}
\label{f-tab-mdp-cartoon}
\end{center}
\end{figure}

\begin{figure}
\begin{center}
\includegraphics[width=0.4\textwidth]{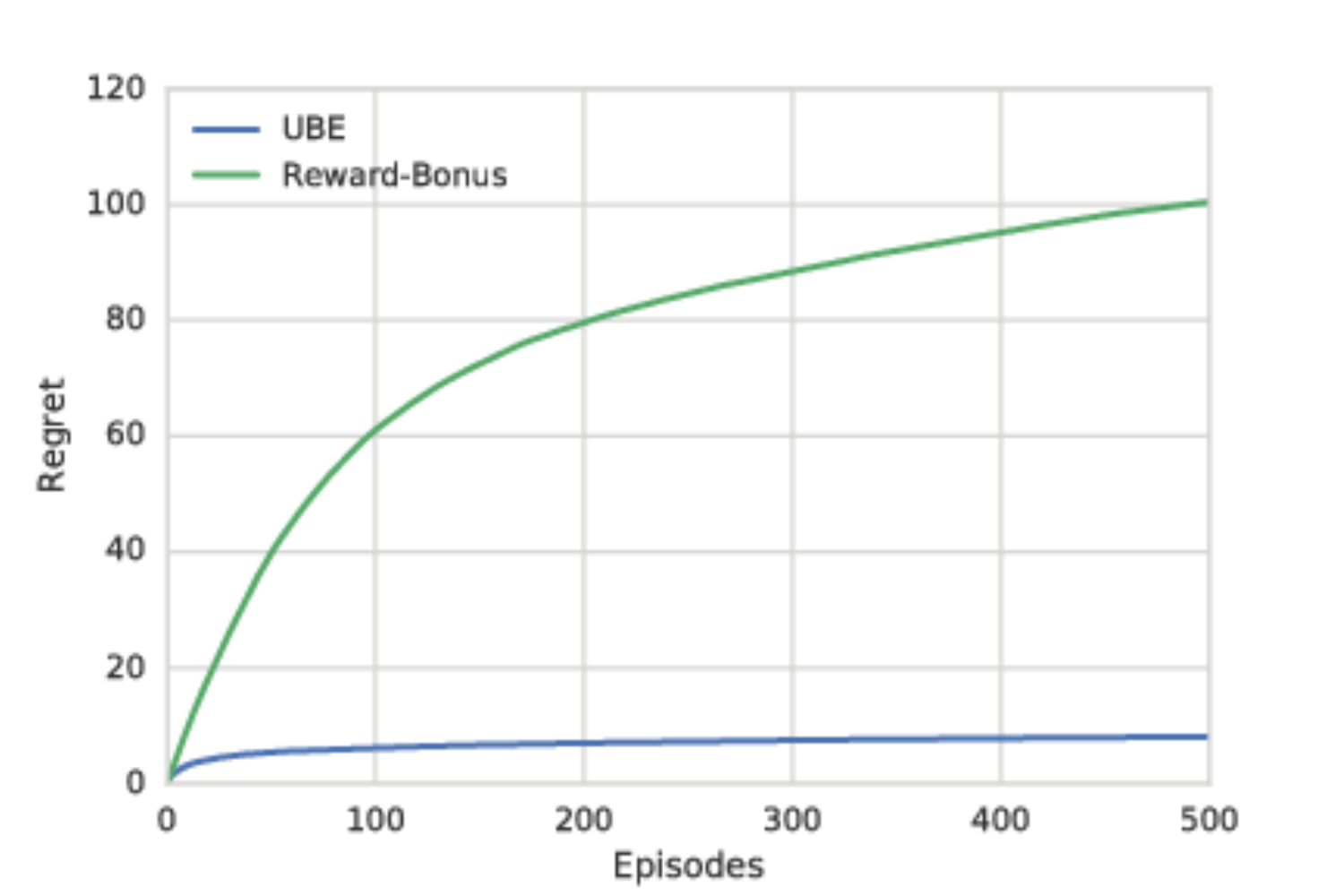}
\caption{Regret over time for the simple tabular MDP.}
\label{f-tab-mdp}
\end{center}
\end{figure}

\section{Estimating the local uncertainty}
\label{s-lin-approx}

Section~\ref{s-ube} outlined how the uncertainty Bellman equation can be used to
propagate local estimates of the variance of $\hat Q$ to global estimates
for the uncertainty. In this section we present some pragmatic approaches to
estimating the local uncertainty $\nu$ that we can then use for practical
learning algorithms inspired by Theorem~\ref{thm:ube}.  We do not claim that
these approaches are the only approaches to estimating the local
uncertainty, or even that these simple approximations are in any sense the
`best'.  Investigating these choices is an important area of future research,
but outside the scope of this short paper.  We present a simple progression from
tabular representations, to linear function approximation and then to non-linear
neural network architectures.

\paragraph{Tabular value estimate.}
Consider the case where the posterior over the mean rewards concentrates
at least as fast the reciprocal of the visit count, \ie,
\[
\vart \hat \mu^h_{sa} \leq \sigma_r^2 / n^h_{sa}
\]
where $\sigma_r$ is the variance of the reward process and $n^h_{sa}$ is the
\emph{visit count} of the agent to state $s$ and action $a$ at time-step $h$,
up to episode $t$.  This is the case when, for example, the rewards and the
prior over the mean reward are both Gaussian. Furthermore, if we assume that the
prior over the transition function is Dirichlet then it is straightforward to
show that
\[
\sum_{s^\prime} \vart \hat P^h_{s^\prime s a}  / \Expect(\hat P^h_{s^\prime s a} |
\Fc_t) \leq |S| / n^h_{sa}
\]
where $|S|$ is the number of next states reachable from $s, a$. This holds
since the likelihood of the transition function is a categorical distribution,
which is conjugate to the Dirichlet distribution and the variance of a Dirichlet
concentrates like the reciprocal of the sum of the counts of each category.
Under these assumptions we can bound the local uncertainty as
\[
\nu^h_{sa} \leq (\sigma_r^2 + Q_\mathrm{max}^2|S|) / n^h_{sa}.
\]
In other words, the local uncertainty can be modeled under these assumptions as
a constant divided by the visit count.

\paragraph{Linear value estimate.}
In the non-tabular case we need some way to estimate the inverse counts in order
to approximate the local uncertainty. Consider a linear value function estimator
$\hat Q^h_{sa} = \phi(s)^T w_a$ for each state and action with fixed basis
functions $\phi(s) : \X
\rightarrow \reals^D$ and learned weights $w_a \in \reals^D$, one for each action.
This setting allows for some generalization between states and actions through
the basis functions. For any fixed dataset we can find the
least squares solution for each action $a$ \cite{boyan1999least},
\[
  \begin{array}{ll}
    \mbox{minimize}_{w_a} \sum_{i=1}^N (\phi(s_i)^T w_a - y_i)_2^2,
  \end{array}
\]
where each $y_i \in \reals$ is a regression target (\eg, a Monte Carlo return
from that state-action).  The solution to this problem is $w_a^\star = (\Phi_a^T
\Phi_a)^{-1}\Phi_a^T y$, where $\Phi_a$ is the matrix consisting of the
$\phi(s_i)$ vectors stacked row-wise (we use the subscript $a$ to denote the
fact that action $a$ was taken at these states).  We can compute the variance
of this estimator, which will provide a proxy for the inverse counts
\cite{bellemare2016unifying}. If we model the targets $y_i$ as IID with unit
variance, then $\var_t w_a^\star = (\Phi_a^T
\Phi_a)^{-1}$.  Given a new state vector $\phi_s$, the variance of the Q-value
estimate at $(s,a)$ is then $\vart\phi_s^T w_a^\star = \phi_s^T(\Phi_a^T
\Phi_a)^{-1}\phi_s$, which we can take to be our estimate of the inverse counts,
\ie, set $(\hat n^h_{sa})^{-1} = \phi_s^T(\Phi_a^T \Phi_a)^{-1}\phi_s$. Now we can
estimate the local uncertainty as
\begin{equation}
\label{e-lin-var}
\hat \nu^h_{sa} = \beta^2 (\hat n^h_{sa})^{-1} = \beta^2 \phi_s^T(\Phi_a^T
\Phi_a)^{-1}\phi_s \end{equation}
for some $\beta$, which in the tabular case (\ie, where $\phi(s) = e_{s}$
and $D = |\X|$) is equal to $\beta^2 / n^h_{sa}$, as expected.

An agent using this notion of uncertainty must maintain and update the
matrix $\Sigma_a = (\Phi_a^T \Phi_a)^{-1}$ as it receives new data. Given new
sample $\phi$, the updated matrix $\Sigma_a^+$ is given by
\begin{equation}
\label{e-smw}
\begin{array}{rcl}
\Sigma_a^+ &=& \left( \begin{bmatrix} \Phi_a \\
\phi^T \end{bmatrix}^T \begin{bmatrix} \Phi_a \\
\phi^T \end{bmatrix} \right)^{-1} = (\Phi_a^T\Phi_a + \phi\phi^T)^{-1}\\&=&
\Sigma_a - (\Sigma_a\phi\phi^T\Sigma_a) / (1+\phi^T\Sigma_a\phi)
\end{array}
\end{equation}
by the Sherman-Morrison-Woodbury formula \cite{golub2012matrix},
the cost of this update is one matrix multiply and one matrix-matrix
subtraction per step.

\paragraph{Neural networks value estimate.}

If we are approximating our Q-value function using a neural network then
the above analysis does not hold. However if the last layer of the network is
linear, then the Q-values are approximated as $Q^h_{sa} = \phi(s)^T w_a$,
where $w_a$ are the weights of the last layer associated with action $a$ and
$\phi(s)$ is the output of the network up to the last layer for state $s$. In
other words we can think of a neural network as learning a useful set of basis
functions such that a linear combination of them approximates the Q-values.
Then, if we ignore the uncertainty in the $\phi$ mapping, we can reuse the
analysis for the purely linear case to derive an \emph{approximate} measure of
local uncertainty that might be useful in practice.

This scheme has some advantages. As the agent progresses it is learning a state
representation that helps it achieve the goal of maximizing the return.  The
agent will learn to pay attention to small but important details (\eg, the ball
in Atari `breakout') and learn to ignore large but irrelevant changes (\eg, if
the background suddenly changes).  This is a desirable property from the point
of view of using these features to drive exploration, because the states that
differ only in irrelevant ways will be aliased to (roughly) the same state
representation, and states that differ is small but important ways will be
mapped to quite different state vectors, permitting a more task-relevant measure
of uncertainty.

\section{Deep Reinforcement Learning}

Previously we proved that under certain conditions we can bound the variance of
the posterior distribution of the Q-values, and we used the resulting
uncertainty values to derive an exploration strategy.  Here we discuss the
application of that strategy to deep-RL\@. In this case several of the assumptions
we have made to derive theorem~\ref{thm:ube} are violated. This puts us firmly
in the territory of heuristic.  Specifically, the MDPs we apply this to will not
be directed acyclic graphs, the policy that we are estimating the uncertainty
over will not be fixed, we cannot exactly compute the local uncertainty, and we
won't be solving the UBE exactly.  However, empirically, we demonstrate that
this heuristic can perform well in practice, despite the underlying assumptions
being violated.

Our strategy involves \emph{learning} the uncertainty estimates, and then using
them to sample Q-values from the approximate posterior, as in equation
(\ref{e-thomp}).  The technique is described in pseudo-code in
Algorithm~\ref{a-ubealg}. We refer to the technique as `one-step' since the
uncertainty values are updated using a one-step SARSA Bellman backup, but it is
easily extendable to the $n$-step case.  The algorithm takes as input a neural
network which has two output `heads', one which is attempting to learn the
optimal Q-values as normal, the other is attempting to learn the uncertainty
values of the current policy (which is constantly changing). We do not allow the
gradients from the uncertainty output head to flow into the trunk of the
network; this ensures the Q-value estimates are not perturbed by the changing
uncertainty signal.  For the local uncertainty measure we use the linear basis
approximation described in section~\ref{s-lin-approx}. Algorithm~\ref{a-ubealg}
incorporates a discount factor $\gamma \in (0,1)$, since deep RL often uses a
discount even in the purely episodic case.  In this case the Q-learning update
uses a $\gamma$ discount and the Uncertainty Bellman equation~(\ref{e-ube}) is
augmented with a $\gamma^2$ discount factor.

\begin{algorithm}[t]
\caption{One-step UBE exploration with linear uncertainty estimates.}
\begin{algorithmic}
\Require Neural network outputting $Q$ and $u$ estimates, Q-value learning
subroutine \verb|qlearn|, Thompson sampling hyper-parameter $\beta>0$
\State Initialize $\Sigma_a = \mu I$ for each $a$, where $\mu > 0$
\State Get initial state $s$, take initial action $a$
\For{episode $t=1,\ldots, $}
\For{time-step $h=2,\ldots, H+1$}
\State Retrieve $\phi(s)$ from input to last network layer
\State Receive new state $s^\prime$ and reward $r$
\State Calculate $\hat Q^h_{s^\prime b}$ and $u^h_{s^\prime b}$ for each action $b$
\State Sample $\zeta_b \sim \mathcal{N}(0,1)$ for each $b$ and calculate
\[
    a^\prime = \argmax_b(\hat Q^h_{s^\prime b} + \beta \zeta_b (u^h_{s^\prime b})^{1/2})
\]
\State Calculate
\[
  y =
    \left\{
    \begin{array}{l}
      \phi(s)^T\Sigma_a\phi(s),\ \text{if}\ h=H+1\\
      \phi(s)^T\Sigma_a\phi(s)+ \gamma^2 u^{h}_{s^\prime a^\prime},\
      \text{o.w.}
    \end{array} \right.
  \]
\State Take gradient step with respect to error
\[(y - u^{h-1}_{sa})^2\]
\State Update Q-values using \verb|qlearn|$(s,a,r,s^\prime, a^\prime)$
\State Update $\Sigma_{a}$ according to eq. (\ref{e-smw})
\State Take action $a^\prime$
\State Set $s \leftarrow s^\prime$, $a \leftarrow a^\prime$
\EndFor
\EndFor
\end{algorithmic}
\label{a-ubealg}
\end{algorithm}

\subsection{Experimental results}

Here we present results of Algorithm (\ref{a-ubealg}) on the Atari suite of
games \cite{bellemare-ale}, where the network is attempting to learn the
Q-values as in DQN \cite{mnih-atari-2013,mnih-dqn-2015} and the uncertainties
simultaneously. The only change to vanilla DQN we made was to replace the
$\epsilon$-greedy policy with Thompson sampling over the learned uncertainty
values, where the $\beta$ constant in (\ref{e-thomp}) was chosen to be $0.01$
for all games, by a parameter sweep. We used the exact same network
architecture, learning rate, optimizer, pre-processing and replay scheme as
described in \citet{mnih-dqn-2015}.  For the uncertainty sub-network we used a
single fully connected hidden layer with 512 hidden units followed by the output
layer. We trained the uncertainty head using a separate RMSProp optimizer
\cite{tieleman2012lecture} with learning rate $10^{-3}$. The addition of the
uncertainty head and the computation associated with it, only reduced the
frame-rate compared to vanilla DQN by about 10\% on a GPU, so the additional
computational cost of the approach is negligible.

We compare two versions of our approach: a $1$-step method and an $n$-step
method where we set $n$ to $150$.  The $n$-step method accumulates the
uncertainty signal over $n$ time-steps before performing an update which should
lead to the uncertainty signal propagating to earlier encountered states faster,
at the expense of increased variance of the signal. Note that in all cases
the Q-learning update is always $1$-step; our $n$-step implementation
only affects the uncertainty update.

We compare our approaches to vanilla DQN, and also to an exploration bonus
intrinsic motivation approach, where the agent receives an augmented reward
consisting of the extrinsic reward and the square root of the linear uncertainty
in equation (\ref{e-lin-var}), which was scaled by a hyper-parameter chosen to
be $0.1$ by a sweep.  In this case a stochastic policy was still required for
good performance and so we used $\epsilon$-greedy with the DQN annealing
schedule.

We trained all strategies for 200M frames (about 8 days on a GPU).  Each game
and strategy was tested three times per method with the same hyper-parameters
but with different random seeds, and all plots and scores correspond to an
average over the seeds. All scores were normalized by subtracting the average
score achieved by an agent that takes actions uniformly at random. Every 1M
frames the agents were saved and evaluated (without learning) on 0.5M frames,
where each episode is started from the random start condition described in
\cite{mnih-dqn-2015}. The final scores presented correspond to first averaging
the evalution score in each period across seeds, then taking the max average
episodic score observed during any evalution period. Of the tested strategies
the $n$-step UBE approach was the highest performer in 32 out of 57 games, the
$1$-step UBE approach in 14 games, DQN in 1 game, the exploration bonus strategy
in 7 games, and there were 3 ties. In Table~\ref{t-atari-means-basic} we give
the mean and median normalized scores as percentage of an expert human
normalized score across all games, and the number of games where the agent is
`super-human', for each tested algorithm. Note that the mean scores are
significantly affected by a single outlier with very high score (`Atlantis'),
and therefore the median score is a better indicator of agent performance. In
Figure~\ref{f-super-human} we plot the number of games at super-human
performance against frames for each method, and in Figure~\ref{f-median} we plot
the median performance across all games versus frames, where a
score of $1.0$ denotes human performance.  The results across all 57 games, as
well as the learning curves for all 57 games, are given in the appendix.

Of particular interest is the game `Montezuma's Revenge', a notoriously
difficult exploration game where no one-step algorithm has managed to learn
anything useful.  Our $1$-step strategy learns in 200M frames a policy that is
able to consistently get about 500 points, which is the score the agent gets for
picking up the first key and moving into the second room.  In
Figure~\ref{f-montezuma} we show the learning progress of the agents for 500M
frames where we set the Thompson sampling parameter slightly higher; $0.016$
instead of $0.01$ (since this game is a challenging exploration task it stands
to reason that a higher exploration parameter is required).  By the end of 500M
frames the $n$-step agent is consistently getting around 3000 points, which is
several rooms of progress.  These scores are close to state-of-the-art, and are
state-of-the-art for one-step methods (like DQN) to the best of our knowledge.

In the recent work by \citet{bellemare2016unifying}, and
the follow-up work by \citet{ostrovski2017count}, the authors add an intrinsic
motivation signal to a DQN-style agent that has been modified to use the full
Monte Carlo return of the episode when learning the Q-values. Using Monte Carlo
returns dramatically improves the performance of DQN in a way unrelated to
exploration, and due to that change we cannot compare the numerical results
directly. In order to have a point of comparison we implemented our own
intrinisic motivation exploration signal, as discussed above.  Similarly, we
cannot compare directly to the numerical results obtained by Bootstrap DQN
\cite{osband2016deep} since that agent is using Double-DQN, a variant of DQN
that achieves a higher performance in a way unrelated to exploration. However,
we note that our approach achieves a higher evaluation score in 27 out of the 48
games tested in the Bootstrap DQN paper despite using an inferior base DQN
implementation, and it runs at a significantly lower computational and memory
cost.

\begin{table}[h]
\begin{center}
\small
\begin{tabular}{ c| c c c}
& \bf{mean} & \bf{median} & \bf{$>$ human}\\
\hline
DQN & 688.60 & 79.41 & 21 \\
DQN Intrinsic Motivation & 472.93 & 76.73 & 24 \\
DQN UBE 1-step & 776.40 & 94.54 & 26 \\
DQN UBE n-step & 439.88 & 126.41 & 35 \\
\end{tabular}
\end{center}
\caption{Scores for the Atari suite, as a percentage of human score.}
\label{t-atari-means-basic}
\end{table}

\begin{figure}
\begin{center}
\includegraphics[width=0.5\textwidth]{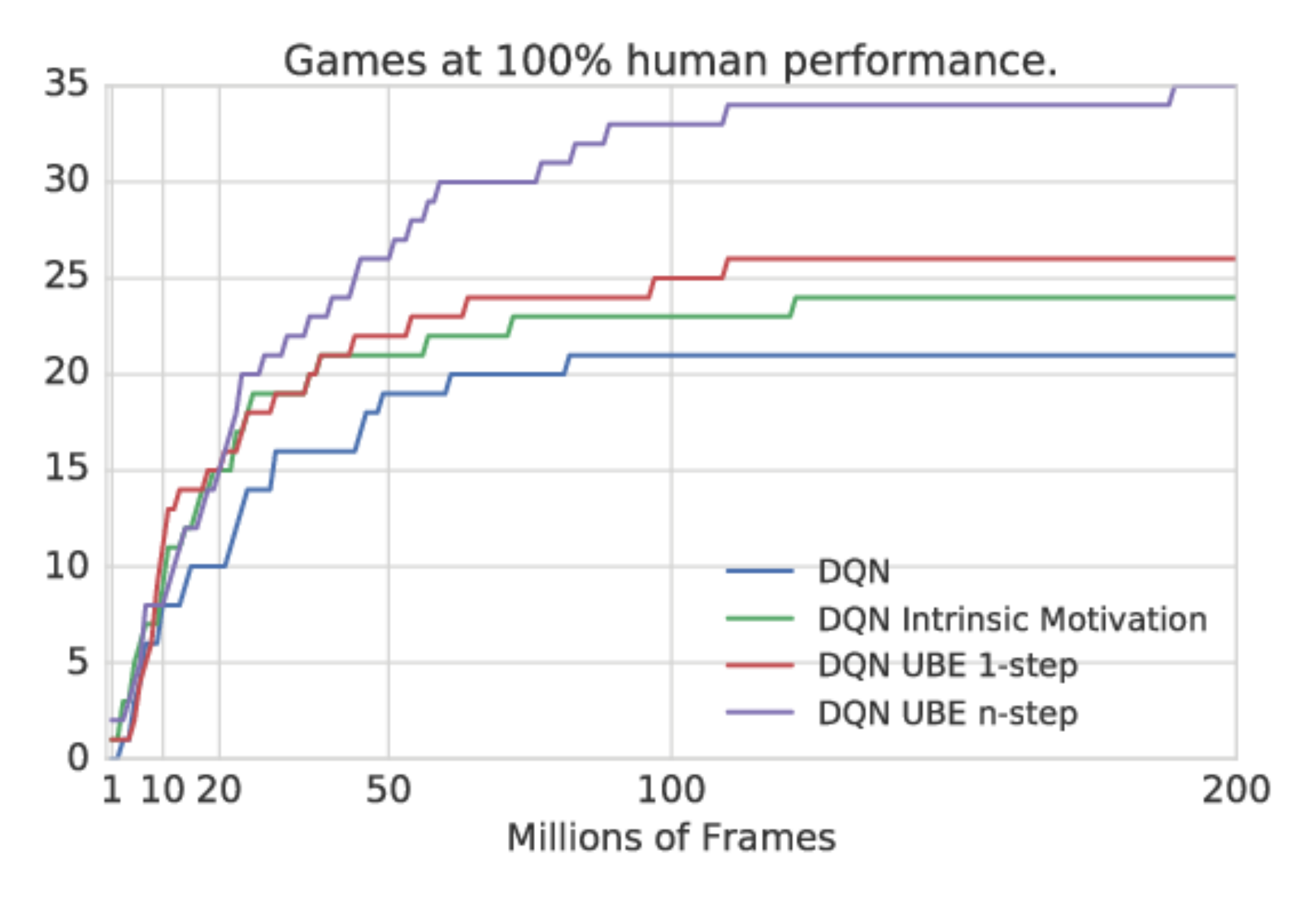}
\caption{Number of games at super-human performance.}
\label{f-super-human}
\end{center}
\end{figure}

\begin{figure}
\begin{center}
\includegraphics[width=0.5\textwidth]{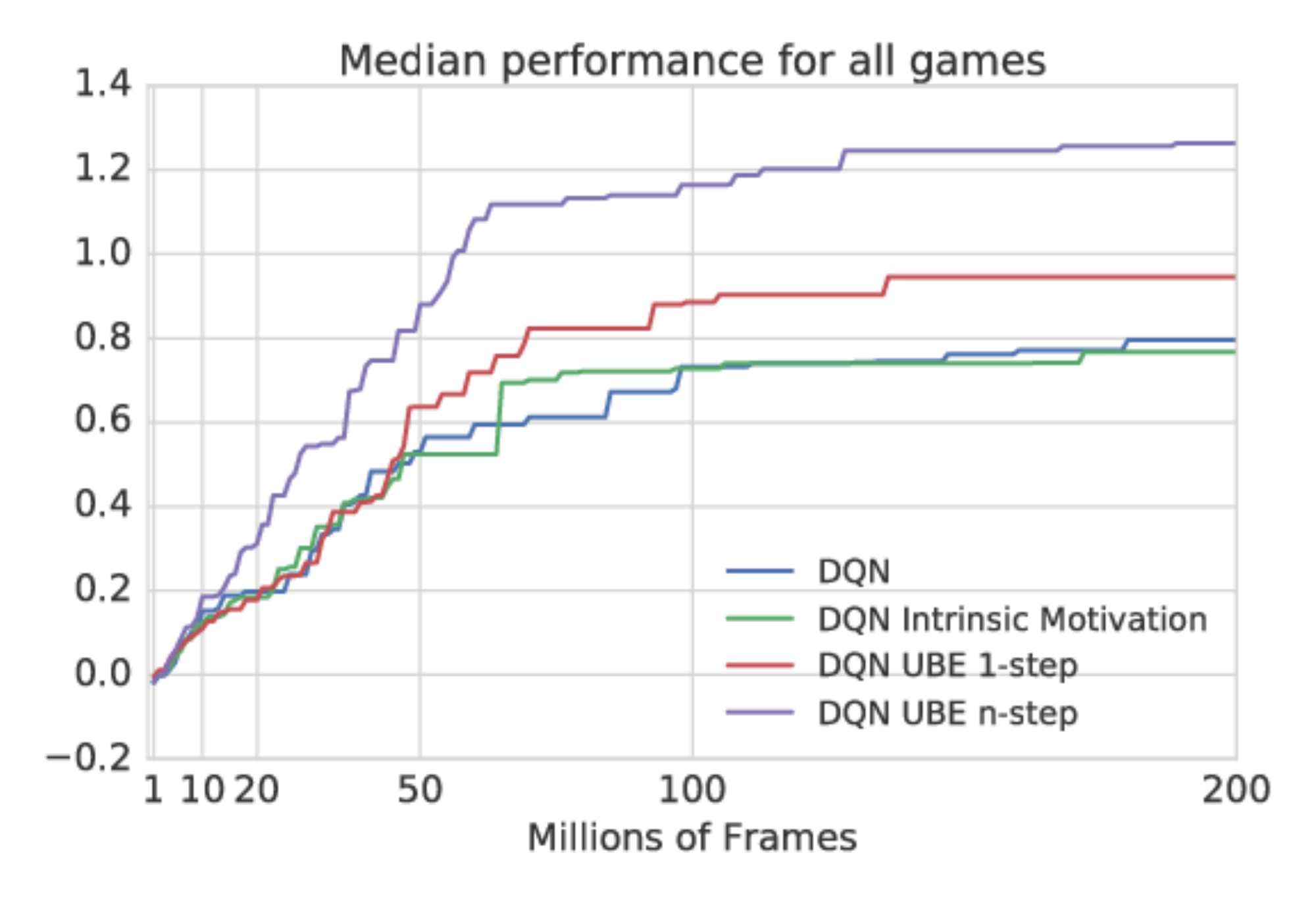}
\caption{Normalized median performance across all games, a score of $1.0$ is
human-level performance.}
\label{f-median}
\end{center}
\end{figure}

\begin{figure}
\begin{center}
\includegraphics[width=0.5\textwidth]{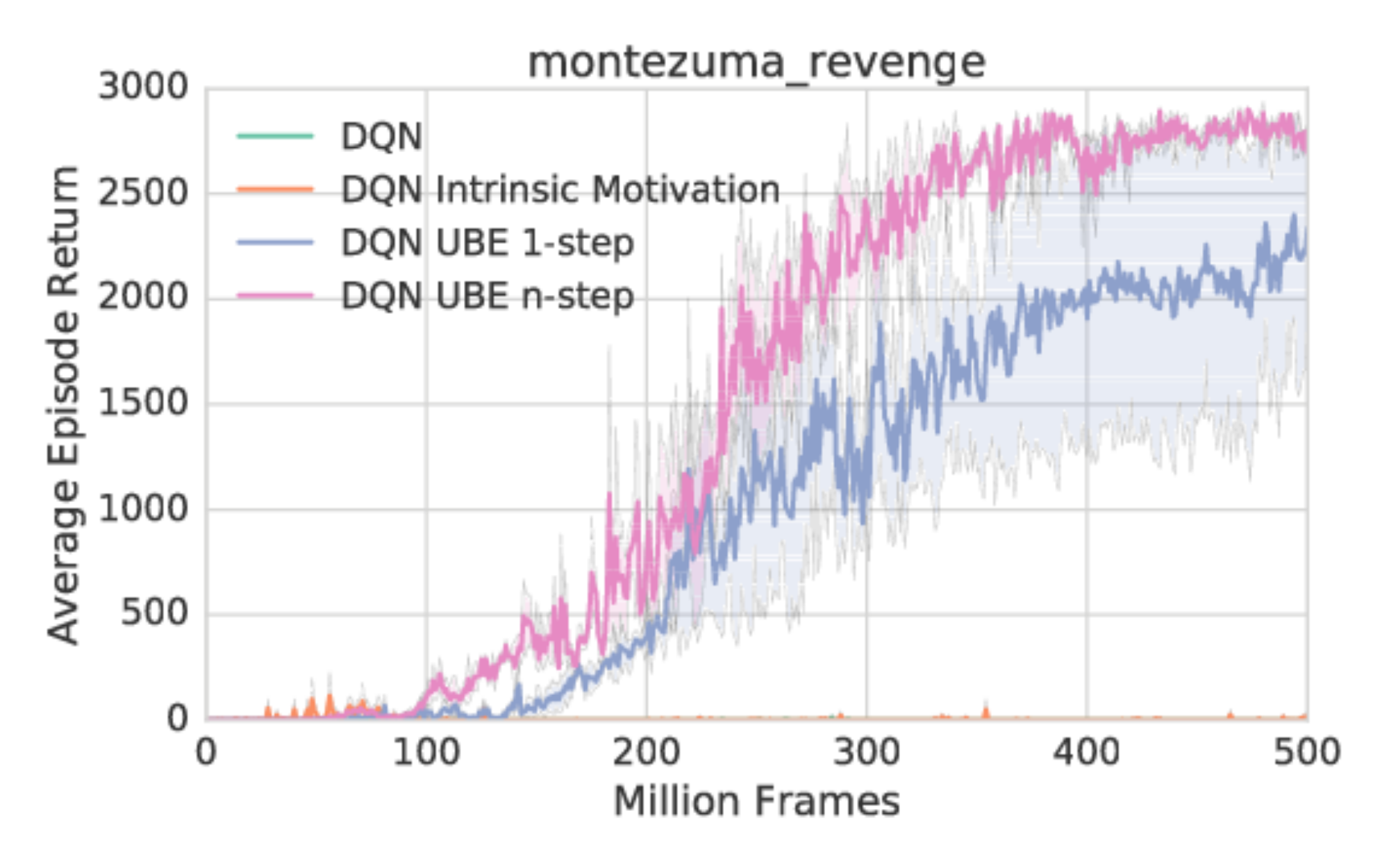}
\caption{Montezuma's Revenge performance.}
\label{f-montezuma}
\end{center}
\end{figure}

\section{Conclusion}
In this paper we derived a Bellman equation for the uncertainty over the
Q-values of a policy. This allows an agent to propagate uncertainty across many
time-steps in the same way that value propagates through time in the standard
dynamic programming recursion.  This uncertainty can be used by the agent to
make decisions about which states and actions to explore, in order to gather
more data about the environment and learn a better policy.  Since the
uncertainty satisfies a Bellman recursion, the agent can learn it using the same
reinforcement learning machinery that has been developed for value functions.
We showed that a heuristic algorithm based on this learned uncertainty can boost
the performance of standard deep-RL techniques. Our technique was able to
significantly improve the performance of DQN across the Atari suite of games,
when compared against naive strategies like $\epsilon$-greedy.

\section{Acknowledgments}
We thank Marc Bellemare, David Silver, Koray Kavukcuoglu, Daniel Kasenberg, and
Mohammad Gheshlaghi Azar for useful discussion and suggestions on the paper.

\bibliography{ube}
\bibliographystyle{icml2018}

\clearpage
\onecolumn
\section*{Appendix}
\subsection*{Proof of Lemma 1}
For ease of exposition in this derivation we shall use the notation
$\Expectt(\cdot)$ to denote the expectation of a random variable conditioned on
the history $\Fc_t$, rather than the usual $\Expect(\cdot | \Fc_t)$.

Recall that if we sample $\hat \mu$ from the posterior over the mean rewards
$\phi_{\mu| \Fc_t}$, and $\hat P$ from the posterior over the transition
probability matrix $\phi_{P| \Fc_t}$ then the Q-values that are the unique
solution to
\[
\hat Q^h_{sa} = \hat \mu^h_{sa} + \sum_{s^\prime, a^\prime} \pi^h_{s^\prime a^\prime}
\hat P^h_{s^\prime s a} \hat Q^{h+1}_{s^\prime a^\prime}
\]
are a sample from the (implicit) posterior over Q-values, conditioned on
$\Fc_t$ \cite{strens2000bayesian}.

Using the definition of the conditional variance
\[
\begin{array}{rcl}
\vart \hat Q^h_{sa}&=&\Expectt \Big(\hat Q^h_{sa} - \Expectt \hat Q^h_{sa}
 \Big)^2 \\

&=&\Expectt \Big(\hat \mu^h_{sa} - \Expectt \hat \mu^h_{sa} + \sum_{s^\prime,
a^\prime}
\pi^h_{s^\prime a^\prime} (\hat
P^h_{s^\prime s a}\hat Q^{h+1}_{s^\prime a^\prime} - \Expectt(\hat P^h_{s^\prime s a}
\hat Q^{h+1}_{s^\prime a^\prime}))  \Big)^2 \\

&=&\Expectt \Big(\hat \mu^h_{sa} - \Expectt \hat \mu^h_{sa}  \Big)^2+
\Expectt\Big(\sum_{s^\prime,
a^\prime}
\pi^h_{s^\prime a^\prime} (\hat
P^h_{s^\prime s a}\hat Q^{h+1}_{s^\prime a^\prime} - \Expectt(\hat P^h_{s^\prime s a}
\hat Q^{h+1}_{s^\prime a^\prime})) \Big)^2 \\


\end{array}
\]
where we have used the fact that $\hat \mu^h_{sa}$ is conditionally independent
(conditioned on $\Fc_t$) of $\hat P^h_{s^\prime s a}\hat Q^{h+1}_{s^\prime
a^\prime}$ because assumption~\ref{ass-1} implies that $\hat Q^{h+1}_{s^\prime
a^\prime}$ depends only on downstream quantities.

Now we make the assumption that $\Expectt \hat P^h_{s^\prime s a} > 0$ for all
$h, s^\prime, s, a$.  Note that this is not a restriction because if $\Expectt
\hat P^h_{s^\prime s a} = 0$, then the fact that $P$ is nonnegative combined
with Markov's inequality implies that $P^h_{s^\prime s a} = 0$, in which case we
can just remove that term from the sum, since it contributes no variance (or
equivalently $s^\prime$ is not reachable after taking $s,a$).  Furthermore, note
that any sample must satisfy $\sum_{s^\prime} \hat P^h_{s^\prime s a} = 1$ which
implies that $\sum_{s^\prime} \Expectt \hat P^h_{s^\prime s a} = 1$ and
therefore $\sum_{s^\prime a^\prime } \pi_{s^\prime a^\prime} \Expectt \hat
P^h_{s^\prime s a} = 1$. In other words $\pi^h_{s^\prime a^\prime}\Expectt \hat
P^h_{s^\prime s a}$ defines a probability distribution over $s^\prime,
a^\prime$.  With this we can bound the second term as
\[
\begin{array}{l}
\Expectt\Big(\sum_{s^\prime, a^\prime}
\pi^h_{s^\prime a^\prime} (\hat
P^h_{s^\prime s a}\hat Q^{h+1}_{s^\prime a^\prime} - \Expectt(\hat P^h_{s^\prime s a}
\hat Q^{h+1}_{s^\prime a^\prime})) \Big)^2\\

\qquad=\Expectt\Big(\sum_{s^\prime,
a^\prime}
\pi^h_{s^\prime a^\prime}(\Expectt \hat P^h_{s^\prime s a}/ \Expectt \hat P^h_{s^\prime s a}) (\hat
P^h_{s^\prime s a}\hat Q^{h+1}_{s^\prime a^\prime} - \Expectt(\hat P^h_{s^\prime s a}
\hat Q^{h+1}_{s^\prime a^\prime}))\Big)^2\\

\qquad\leq \sum_{s^\prime,
a^\prime} \pi^h_{s^\prime a^\prime}\Expectt \hat P^h_{s^\prime s a}\Expectt \Big(\hat
P^h_{s^\prime s a}\hat Q^{h+1}_{s^\prime a^\prime} - \Expectt(\hat P^h_{s^\prime s a}
\hat Q^{h+1}_{s^\prime a^\prime}) \Big)^2/ \Big(\Expectt \hat P^h_{s^\prime s a}\Big)^2

\end{array}
\]
by applying Jensen's inequality to the quadratic. Combining these we have
\begin{equation}
\label{e-app1}
\vart \hat Q^h_{sa} \leq \vart \hat \mu^h_{sa}  +
\sum_{s^\prime,
a^\prime}\pi^h_{s^\prime a^\prime}\Expectt \hat P^h_{s^\prime s a}\Expectt \Big(\hat
P^h_{s^\prime s a}\hat Q^{h+1}_{s^\prime a^\prime} - \Expectt(\hat P^h_{s^\prime s a}
\hat Q^{h+1}_{s^\prime a^\prime}) \Big)^2/ \Big(\Expectt \hat P^h_{s^\prime s a}\Big)^2.
\end{equation}
Assumption~\ref{ass-1} also implies that that $\hat
P^h_{s^\prime s a}$ and $\hat Q^{h+1}_{s^\prime a^\prime}$ are conditionally
independent, and so we can write the middle expectation in the second term as
\[
\begin{array}{rcl}
\Expectt \Big(\hat
P^h_{s^\prime s a}\hat Q^{h+1}_{s^\prime a^\prime} - \Expectt(\hat P^h_{s^\prime s a}
\hat Q^{h+1}_{s^\prime a^\prime})\Big)^2 &=&
\Expectt\Big((\hat P^h_{s^\prime s a}- \Expectt\hat P^h_{s^\prime s
a}) \hat Q^{h+1}_{s^\prime a^\prime}
+ (\Expectt\hat P^h_{s^\prime s a}) (\hat Q^{h+1}_{s^\prime a^\prime} - \Expectt\hat
Q^{h+1}_{s^\prime a^\prime})\Big)^2 \\
&=& \Expectt\Big( (\hat P^h_{s^\prime s a}- \Expectt\hat P^h_{s^\prime s
a}) \hat Q^{h+1}_{s^\prime a^\prime} \Big)^2 + \Expectt\Big((\Expectt\hat
P^h_{s^\prime s a}) (\hat Q^{h+1}_{s^\prime a^\prime} - \Expectt\hat
Q^{h+1}_{s^\prime a^\prime})\Big)^2.
\end{array}
\]
Now using the conditional independence property again and the fact that the
Q-values are bounded, as implied by assumption~\ref{ass-2}, we have
\[
\Expectt\Big( (\hat P^h_{s^\prime s a}- \Expectt\hat P^h_{s^\prime s
a}) \hat Q^{h+1}_{s^\prime a^\prime} \Big)^2 =
\Expectt\Big (\hat P^h_{s^\prime s a}- \Expectt\hat P^h_{s^\prime s
a}\Big)^2 \Expectt \Big(\hat Q^{h+1}_{s^\prime a^\prime} \Big)^2
\leq
Q^2_\mathrm{max} \vart \hat P^h_{s^\prime s a},
\]
and similarly 
\[
\Expectt\Big((\Expectt\hat P^h_{s^\prime s a}) (\hat Q^{h+1}_{s^\prime a^\prime} -
\Expectt\hat Q^{h+1}_{s^\prime a^\prime})\Big)^2 =(\Expectt\hat P^h_{s^\prime s a})^2
\Expectt\Big(\hat Q^{h+1}_{s^\prime a^\prime} - \Expectt\hat Q^h(s^\prime,
a^\prime))\Big)^2  = (\Expectt \hat P^h_{s^\prime s a})^2 \vart \hat
Q^{h+1}_{s^\prime a^\prime}.
\]
And so we have
\[
\Expectt \Big(\hat
P^h_{s^\prime s a}\hat Q^{h+1}_{s^\prime a^\prime} - \Expectt(\hat P^h_{s^\prime
s a} \hat Q^{h+1}_{s^\prime a^\prime})\Big)^2 \leq Q^2_\mathrm{max} \vart \hat
P^h_{s^\prime s a} +  (\Expectt \hat P^h_{s^\prime s a})^2 \vart \hat
Q^{h+1}_{s^\prime a^\prime}.
\]
Putting this together with equation (\ref{e-app1}) we obtain
\[
\vart \hat Q^h_{sa} \leq \nu^h_{sa} + \sum_{s^\prime, a^\prime}
\pi^h_{s^\prime a^\prime} \Expectt\hat P^h_{s^\prime s a} \vart \hat
Q^{h+1}_{s^\prime a^\prime}
\]
where $\nu^h_{sa}$ is the \emph{local} uncertainty, and is given by
\[
\nu^h_{sa} = \vart \hat \mu^h_{sa} +
Q^2_\mathrm{max} \sum_{s^\prime}
\vart \hat P^h_{s^\prime s a} / \Expectt\hat P^h_{s^\prime s a}.
\]

\newpage
\subsection*{Atari suite scores}
\begin{table}[H]
\begin{center}
\footnotesize
\begin{tabular}{ c|c|c|c|c}
Game  & DQN & DQN Intrinsic Motivation & DQN UBE 1-step & DQN UBE n-step \\
\hline
alien &      40.96 &      28.13 & \bf{46.90} &      43.61 \\
amidar &      58.17 &      41.50 & \bf{83.48} &      83.14 \\
assault &     479.34 &     647.16 &     887.18 & \bf{1112.28} \\
asterix &      67.26 &      74.01 &      82.34 & \bf{130.95} \\
asteroids &       1.86 &       2.25 & \bf{3.54} &       2.61 \\
atlantis & \bf{28662.58} &   14382.27 & \bf{28655.71} &    6889.77 \\
bank heist &      56.43 &      54.20 &      97.72 & \bf{162.90} \\
battle zone &      70.47 &      77.80 &      49.63 & \bf{101.68} \\
beam rider &      63.22 &      58.60 &     106.79 & \bf{149.93} \\
berzerk &      18.29 &      21.30 &      44.92 & \bf{2284.45} \\
bowling &      13.03 & \bf{20.10} &      -3.64 &      14.12 \\
boxing &     782.87 &     784.74 &     811.68 & \bf{816.42} \\
breakout &    1438.45 &    1377.36 & \bf{2045.13} &    1474.66 \\
centipede & \bf{34.54} &      25.04 &      22.62 &      18.94 \\
chopper command &      58.06 &      74.19 &      69.67 & \bf{75.90} \\
crazy climber &     428.59 &     431.10 &     496.21 & \bf{499.34} \\
defender &      83.94 &      78.20 &      94.54 & \bf{209.70} \\
demon attack &     338.43 &     372.29 &     765.46 & \bf{897.89} \\
double dunk &     481.10 &     575.90 &     750.00 & \bf{1031.25} \\
enduro &      92.01 &     102.36 &       9.94 & \bf{154.21} \\
fishing derby &      85.03 &      95.68 &      97.51 & \bf{105.80} \\
freeway &      81.08 &     121.85 &       0.02 & \bf{126.41} \\
frostbite &       9.86 &      16.02 &      11.26 & \bf{21.22} \\
gopher &     392.09 &     558.74 &     656.70 & \bf{867.45} \\
gravitar &       2.38 &       4.07 &       2.06 & \bf{4.91} \\
hero &      49.27 & \bf{63.34} &      47.63 &      34.58 \\
ice hockey &      64.17 & \bf{64.49} &      38.22 &      58.43 \\
jamesbond &     193.96 &     230.36 &     198.40 & \bf{430.89} \\
kangaroo &     266.39 &     311.21 &     202.79 & \bf{537.69} \\
krull &     656.32 &     702.15 & \bf{1033.61} &     838.45 \\
kung fu master &     103.20 &     112.79 &     128.30 & \bf{153.40} \\
montezuma revenge &      -0.49 &       4.21 & \bf{11.43} &       0.80 \\
ms pacman &      16.11 &      16.17 &      18.42 & \bf{19.82} \\
name this game &     114.24 &     101.23 & \bf{129.62} &     127.99 \\
phoenix &     115.17 &     157.34 & \bf{199.39} &     167.51 \\
pitfall & \bf{5.49} & \bf{5.49} & \bf{5.49} & \bf{5.49} \\
pong &     112.04 &     112.06 & \bf{116.32} & \bf{116.37} \\
private eye &      -0.04 & \bf{0.44} &      -0.44 &       0.33 \\
qbert &      79.41 &     103.53 &     124.96 & \bf{125.85} \\
riverraid &      43.41 &      46.33 &      60.73 & \bf{68.24} \\
road runner &     524.43 &     531.48 &     722.15 & \bf{732.09} \\
robotank &     770.41 &     779.94 &     414.29 & \bf{803.06} \\
seaquest &       7.93 & \bf{10.61} &       9.01 &       9.31 \\
skiing &      11.47 &      15.26 &      31.55 & \bf{54.31} \\
solaris &       3.48 &       8.23 & \bf{18.56} &      -6.53 \\
space invaders &      87.42 &      72.95 &     125.29 & \bf{138.65} \\
star gunner &     309.47 &     398.98 &     456.86 & \bf{547.39} \\
surround &      16.67 &      19.44 &      37.42 & \bf{60.98} \\
tennis &     145.58 &     145.58 & \bf{227.93} &     145.58 \\
time pilot &      92.53 &      76.73 &      88.36 & \bf{121.64} \\
tutankham &     148.29 & \bf{191.72} &     132.62 &     138.12 \\
up n down &      92.62 &      95.73 &     139.35 & \bf{142.12} \\
venture &       4.05 & \bf{10.77} &      -1.24 &       8.73 \\
video pinball &    1230.64 &    2393.54 & \bf{3354.58} &    1992.87 \\
wizard of wor &      73.26 &      73.96 & \bf{187.48} &     118.76 \\
yars revenge &      37.93 &      23.24 & \bf{46.21} &      44.36 \\
zaxxon &      35.24 &      53.14 & \bf{62.20} &      56.44 \\
\hline
\end{tabular}
\end{center}
\caption{Normalized scores for the Atari suite from random starts, as a percentage of human normalized score.}
\label{t-atari-basic}
\end{table}


\end{document}